\DeclarePairedDelimiter\abs{\lvert}{\rvert}%
\DeclarePairedDelimiter\norm{\lVert}{\rVert}%
\def\BState{\State\hskip-\ALG@thistlm}
\let\oldabs\abs
\def\abs{\@ifstar{\oldabs}{\oldabs*}}
\let\oldnorm\norm
\def\norm{\@ifstar{\oldnorm}{\oldnorm*}}
\title[Limit Cycles of AdaBoost]{Limit Cycles of AdaBoost}
\begin{document}
\maketitle

\begin{abstract}
\noindent The iterative weight update for the AdaBoost machine learning algorithm may be realized as a dynamical map on a probability simplex. When learning a low-dimensional data set this algorithm has a tendency towards cycling behavior, which is the topic of this paper. AdaBoost's cycling behavior lends itself to direct computational methods that are ineffective in the general, non-cycling case of the algorithm. From these computational properties we give a concrete correspondence between AdaBoost's cycling behavior and continued fractions dynamics. Then we explore the results of this correspondence to expound on how the algorithm comes to be in this periodic state at all. What we intend for this work is to be a novel and self-contained explanation for the cycling dynamics of this machine learning algorithm.
\end{abstract}

\begin{keywords}%
boosting, dynamics, continued fractions, cycling
\end{keywords}


\section{Introduction}
The AdaBoost algorithm is a popular machine learning implementation for binary classification learning presented by Freund and Schapire in 1997 \cite{adaboost} with a substantial literature of explanatory texts \cite{schap} \cite{boosting}. This algorithm iteratively reduces its classification error on a vast array of data sets, and may be adapted to more than binary classification problems \cite{hastie} \cite{multiclass}. AdaBoost is known as an ensemble method and closely resembles the canonical ensemble of statistical physics \cite[Ch.\ 19-20]{phys}. Ensemble methods allow for the creation of relatively complex and accurate classifiers from \emph{weak classifiers}.

Cycling dynamics in AdaBoost have been observed as early as Rudin, Daubechies, and Schapire in 2004 \cite{rudin}, where our paper gets much of its inspiration. In 2004 these researchers attempted to associate with these cycling dynamics the \emph{margins explanation} \cite{ogmargin} of \emph{Optimal} AdaBoost's effectiveness. In this context the margin of a data point of the data set AdaBoost learns is its distance from the decision boundary chosen at a particular iteration. 

This explanation for the effectiveness of the algorithm has been influential in guiding research towards variations and improvements on the AdaBoost \cite{marginvs} \cite{softmargin} \cite{rudin2007} \cite{wang}. The ultimate truth of this explanation is complicated \cite{breiman} \cite{reyzin}. However, the cycling dynamics of AdaBoost have been historically associated with the margins explanation given that the cycles may correspond to certain \emph{maximum margin} solutions for weak classifier choice \cite{ortiz} \cite{rudin}. Besides this margin explanation, AdaBoost's dynamics are also associated with a modest list of open problems \cite{bel} \cite{cycle}.

In this paper we present a correspondence between objects of classical dynamical systems and the algorithmic cycling behavior of AdaBoost. This relationship manifests in the iterative update of parameters by the algorithm, inducing actions equivalent to continued fractions dynamics on the unit interval. Rudin, Daubechies, and Schapire \cite{rudin} along with Belanich and Ortiz in 2012 \cite{ortiz} were able to implement sophisticated dynamical methods that help explain more general phenomena of the algorithm. However, our work is more precisely about what these cycles are with respect to the basic dynamics that induce them.

Our approach for the results of this paper come from modelling the cycling dynamics of AdaBoost in analogy with continued fractions dynamics. We begin in this direction by introducing AdaBoost along with the functions used to model its cycling behavior. Following this we introduce an \emph{edge update} function in Theorem~\ref{thm:3wgt} that we will use in analogy with a continued fractions map after some simplifications. The final result will be Theorem~\ref{rem:rat}, which proves the uniqueness of $k$-cycles induced by this continued fractions map in AdaBoost for each $k\in\mathbb{N}$.

\section{Preliminaries}


The AdaBoost algorithm makes use of an $n$-component, normalized \textbf{weight vector} $\vec{w}_t$ at iteration $t\in\mathbb{N}$, the \emph{edge} value that will be defined in the next section, and the abstract notion of a \emph{hypothesis space} populated with the possible ways in which the algorithm may classify the data set. In this sense these hypotheses are our weak learners that are combined to create a \emph{strong} learner.

AdaBoost iteratively applies hypotheses from its hypothesis space $\mathcal{H}$ and learns from its mistakes, focusing most on its misclassifications. Suppose that $m\in\mathbb{N}$. Let $\mathcal{D}$ be a data set with $n$ binary-labeled data points with $m$ features each. Our data set is represented by
\[\mathcal{D}=X\times Y\]
with features $X$ taken from $\mathcal{X}$ the set of possible features and $Y$ the binary labels for each point taken from $\mathcal{Y}$ the set of possible labels so that $X\times Y\subset\mathcal{X}\times\mathcal{Y}$. Conventionally, the labels in $Y$ take values $\pm 1$.

We initialize the weight vector
\[\vec{w}_1=( w_1,...,w_n)\]
so that $w_i=\frac{1}{n}$ for $i=1,...,n$. Each element of $\mathcal{D}$ is given a weight according to its significance to the algorithm. Initially, these weights are uniform. With each iteration, AdaBoost selects a hypothesis $h_t\in\mathcal{H}$ at iteration $t$ that maximizes the \textbf{edge}
value $r_t$ given below by our first definition.

For use in the work to come we will have to subdivide the weight component indexing set. From now on we define $i\in I$ to be exclusively the index for properly classified data points and $j\in J$ the misclassified data points, all for $t$ some iteration. Since iteration notation is present on all terms using index sets $I,J$, we omit iteration subscripts on $i,j$.
\begin{definition}[edge]
Take AdaBoost to be at iteration $t$. Suppose that $\vec{w}_t$ is an $n$-component weight vector and let $I\cup J$ be an index of $1,...,n$. Let $I,J$ be as defined in the previous paragraph according to the hypothesis $h_t$. Then we define the \textbf{edge} of AdaBoost to be the following value at iteration $t$
\begin{equation}\label{eq:edgeformula}
r_t=\sum_{i\in I}w_{i,t}-\sum_{j\in J}w_{j,t}.
\end{equation}
\end{definition}

In the above sense, a chosen hypothesis $h_t$ induces an edge at each iteration $t$ by AdaBoost selecting a hypothesis that maximizes the edge value. We must require that $r_t>0$ for all $t$, or else our weak learners classify at worse than random chance.

\newpage

\begin{definition}[mistake dichotomy \cite{ortiz}]\label{eq:mistake}
Let $\vec{w}_t$ be the weight vector of AdaBoost with $n$-components at iteration $t$. The algorithm must track properly classified and misclassified data points at $t$ for use in the edge definition. Using the same index as $\vec{w}_t$ we define an $n$-component vector 
\begin{equation*}
\eta_t=( \eta_{1,t},...,\eta_{n,t})
\end{equation*}
so that $\eta_{i,t}\in\{\pm 1\}$ with $\eta_{i,t}$ matching the sign of $w_{i,t}$ in the edge  formula. We call this vector the \textbf{mistake dichotomy}. If we let $\vec{y}=(y_i)^{n}_{i=1}$ be the vector of labels from $\mathcal{D}$, we can also define the mistake dichotomy at iteration $t$ with $\eta_t=(y_ih_t(x_i))^{n}_{i=1}$ for $h_t(x_i)$ the value of the $i$-th data point hypothesized by $h_t$.
\end{definition}

\begin{algorithm}
 \caption{\emph{Optimal} AdaBoost}\label{alg:adaboost}
 \SetAlgoLined
 \KwData{$\mathcal{D}$}
 \KwResult{Combined classifier}
 initialization\;
 
 $w_{i,0} \gets \frac{1}{n}$ \text{\ \ \ \ \ \ \ \ \ \ \ \ \ \ \ \ \ \ \ \ \ \ \ \ \ \ \ \ \ \ \ \ \ \ \ \ \ \ \ \ \ \ \ \ \ \ for $n$ components of $\vec{w}_0$}
 
 $t \gets 0$\;
 
 \For{$t\leq t_{max}$}{
 
  $\eta_t\in\text{argmax}_{\eta'}(\vec{w}_t\cdot\eta')$\;
  
  $r_t=\vec{w}_t\cdot\eta_t$ \text{\ \ \ \ \ \ \ \ \ \ \ \ \ \ \ \ \ \ \ \ \ \ \ \ \ \ \ \ \ \ \ \ \ \ \ \ the optimal edge at iter. $t$}\;
  
  $\alpha_t=\frac{1}{2}\log\left(\frac{1+r_t}{1-r_t}\right)$\;
  
  $H_t(x)=H_{t-1}(x)+\alpha_th_{t}(x)$ \text{\ \ \ \ \ \ \ \ update combined classifier}\;
  
  $w_{i,t}\gets w_{i,t}e^{-\eta_{i,t}\alpha_t}$\;
  
  $w_{i,t+1}=\frac{w_{i,t}e^{-\eta_{i,t}\alpha_t}}{Z_{t}}$ \text{\ \ \ \ \ \ \ \ \ \ \ \ \ \ \ \ \ \ \ \ \ \ normalization for each $i$}\;
  
}
\textbf{return} $H_{t_{max}}(x)$ \text{\ \ \ \ \ \ \ \ \ \ \ \ \ \ \ \ \ \ \ \ \ \ \ \ \ \ \ \ \ \ \ \ \ final classifier}

\end{algorithm}

Where we define the \emph{normalization constant} $Z_{t}$ to be
\[Z_{t}=\sum^n_{i=1}w_{i,t}e^{-\eta_{i,t}\alpha_t}.\]

Via \cite{rudin} we can use an alternative update formula instead of the computationally cumbersome exponential form that is given in the algorithm pseudocode.
\begin{definition}[weight update]
Given a vector $\vec{w}_t$ of $n$ weights at iteration $t$ along with its mistake dichotomy of the same iteration, we may use iteration $t$ weights to calculate iteration $t+1$ weights by
\begin{equation}\label{eq:wgtformula}
w_{i,t+1}=\frac{w_{i,t}}{1+ \eta_{i,t}r_t}.
\end{equation}
We call this the \textbf{weight update} from iteration $t$ to $t+1$.
\end{definition}

This definition will factor prominently into both the motivations and mathematical work to come. The alternative weight update above greatly simplifies the computations associated with analyzing the structure of our AdaBoost algorithm. As we will see, this structure is surprisingly simple when the algorithm converges to a closed loop. It is this alternative weight update that makes the correspondence to continued fractions dynamics most obvious.

An important relation to note is an alternative writing of the edge formula Eqn.~\ref{eq:edgeformula} using the fact that for all $t$ we have $\sum_{i\in I\cup J}w_{i,t}=1$, which implies
\begin{equation}\label{eq:edgealt}
r_{t}=1-2\sum_{j\in J}w_{j,t}
\end{equation}
for $J$ the index of misclassified data points at iteration $t$ with respect to a chosen hypothesis that maximizes $r_t$.

\begin{definition}[mistake lattice]\label{def:mistake lattice}
Taking the transpose of the row-vector form of mistake dichotomies for iterations of at least $t$, we can write out all of these dichotomies in a row
\[
\begin{pmatrix}
\eta_{1,t}&\eta_{1,t+1}&\eta_{1,t+2}&\dots\\
\eta_{2,t}&\eta_{2,t+1}&\eta_{2,t+2}&\dots\\
\eta_{3,t}&\eta_{3,t+1}&\eta_{3,t+2}&\dots\\
\vdots&\vdots&\vdots&\\
\eta_{n,t}&\eta_{n,t+1}&\eta_{n,t+2}&\dots
\end{pmatrix}.
\] 
We call this the \textbf{mistake lattice}, or simply \textbf{lattice}, of AdaBoost's mistakes from iteration $t$ onward. Furthermore, we will refer to the lattice as having a correspondence $w_{i,t}\sim \eta_{i,t}$ that handily represents the labelling success of AdaBoost on the $i$-th data point at iteration $t$.
\end{definition}

\begin{remark}[\cite{prum}]
This naming convention comes from statistical mechanics, in which the representation of binary states in the Ising Model is referred to as a lattice. AdaBoost forms a 2D lattice in the sense of the above mistake lattice via iterations over data points of $\mathcal{D}$. The 2D mistake lattice of AdaBoost is a record of the algorithm's mistakes rather than a 2D grid of interacting particles, unlike the Ising Model.
\end{remark}

\begin{definition}
Suppose that we have any lattice as defined above that corresponds to $n$ data points starting at iteration $t$. If this lattice of AdaBoost has the condition that $\eta_{i,j} = -1$ implies $\eta_{i,j+1} = 1$ starting at iteration $t$, we say it has the \textbf{periodic learning condition}, or $\nabla$.
\end{definition}

\begin{remark}
Although this condition is rather strong and unusual in the context of machine learning, the ordering of lattices in this way is common in statistical mechanics \cite{order0} and material sciences \cite{order1}. In a particularly similar venue, we see these sorts of ordered structures in lattices of the Ising Model with respect to magnetism \cite{order2}. The Ising Model is mathematically related to AdaBoost via the formalism of canonical ensembles \cite{prum}. AdaBoost is an iteratively defined 1D canonical ensemble with no particle interactions, but rather forward interactions between iterations of the 1D lattice structure that form the combined 2D lattice above. Carefully, we cannot conflate the iterative 2D mistake lattice with a 2D Ising Model lattice as a 2D canonical ensemble is not iterative in the same way as AdaBoost.
\end{remark}

\newpage

\begin{example}
A simple example of this condition is the following
\[
\begin{pmatrix}
1&1&-1&1&1&-1&\dots\\
1&-1&1&1&-1&1&\dots\\
-1&1&1&-1&1&1&\dots\\
\end{pmatrix}.
\] 
This mistake lattice is the least complex possible example. The lattice arises when AdaBoost has access only to the three dichotomies $(-1,1,1),(1,-1,1),(1,1,-1)$. In \cite{rudin}, the algorithm is shown to cycle when it has access to just these $3$ mistake dichotomies.
\end{example}

While the periodic learning condition may seem artificial, we have been able to find it in data sets of several hundred unique data points. After Section~4 we will have the tools needed to show its natural occurrence in data sets such as the UCI \emph{Iris} data set among others. A collection of empirical results confirming the occurrence of this condition is found in Appendix~\ref{sec:appB} after our formal proofs appendix.

\section{A first look at continued fractions maps}

Our overall motivations for the techniques in this paper come from some simple observations about fixed points for continued fractions maps. This section is written to introduce a reader to these maps and how they relate to simple cases of AdaBoost's cycling dynamics.

In Rudin, Daubechies, and Schapire \cite{rudin} we are presented with a number of experimental, often geometric and computational explanations for what an AdaBoost cycle appears to be in relation to Algorithm~\ref{alg:adaboost}. One may even wonder at this point how we will pull a classical dynamical system out of this algorithm definition. However, there is something quite simple and nice happening in these cycles that can be calculated directly. 

In order to motivate this coming result, consider an example in \cite{rudin} of a cycle that eventually has the fixed edge $r_t=\frac{\sqrt{5}-1}{2}$ for all sufficiently large $t$ and a $3$-cycle on its $3$ weights. A demonstration of a similar cycle in the UCI repository \emph{Iris} data set can be found at the beginning of Appendix~\ref{sec:appB}.

\begin{proposition}
 We find that a function which has a fixed point of this value, the Golden Ratio minus $1$, is the Gauss map \emph{\cite[Ch.\ 1]{brin}}. This continued fractions iterated function is defined
\[G(x)=\frac{1}{x}\mod_1\]
for $x\in (0,1)$.
\end{proposition}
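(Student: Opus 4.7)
The plan is to directly verify that $x_0 = \frac{\sqrt{5}-1}{2}$ is a fixed point of $G$ by computing $G(x_0)$ and showing it equals $x_0$. The verification is entirely algebraic and rests on the defining quadratic relation of the Golden Ratio.

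First I would check that $x_0$ lies in the domain of $G$: since $\sqrt{5} \approx 2.236$, we have $x_0 \approx 0.618 \in (0,1)$, so $G(x_0)$ is defined. Next I would compute the reciprocal $\frac{1}{x_0}$ by rationalizing the denominator:
\[
\frac{1}{x_0} = \frac{2}{\sqrt{5}-1} = \frac{2(\sqrt{5}+1)}{(\sqrt{5}-1)(\sqrt{5}+1)} = \frac{2(\sqrt{5}+1)}{4} = \frac{\sqrt{5}+1}{2}.
\]
The key algebraic observation is then that $\frac{\sqrt{5}+1}{2} = 1 + \frac{\sqrt{5}-1}{2} = 1 + x_0$, equivalently the identity $x_0^2 + x_0 - 1 = 0$ that defines the Golden Ratio minus $1$. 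Reducing modulo $1$ and using $x_0 \in (0,1)$, we obtain
\[
G(x_0) = \frac{1}{x_0} \bmod 1 = (1 + x_0) \bmod 1 = x_0,
\]
which is exactly the fixed point claim.

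There is no substantive obstacle here; the proposition is a one-line computation once the reciprocal is rationalized. The only subtlety worth flagging is the mod-$1$ step, which requires $x_0 \in (0,1)$ so that the integer part of $1 + x_0$ is exactly $1$; this is why I would record the numerical bound on $x_0$ before reducing. I would also remark that this is consistent with the continued fraction expansion $x_0 = [0;1,1,1,\dots]$, which makes the fixed-point property of the Gauss map immediate by unfolding one level of the expansion, providing an alternative route to the same conclusion.
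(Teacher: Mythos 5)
Your proof is correct and follows essentially the same route as the paper's: a direct computation of $\frac{2}{\sqrt{5}-1}$, a check that this value lies in $(1,2)$ so that reduction modulo $1$ subtracts exactly $1$, and an algebraic simplification back to $\frac{\sqrt{5}-1}{2}$. Your version of rationalizing first to get $\frac{\sqrt{5}+1}{2}=1+x_0$ is a slightly tidier packaging of the same verification, but there is no substantive difference in approach.
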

\begin{proof}
We may verify this claim via
$
G\left(\frac{\sqrt{5}-1}{2}\right)=\frac{2}{\sqrt{5}-1}\mod_1=\frac{\sqrt{5}-1}{2}
$
where the rightmost equality follows from the fact that $1<\frac{2}{\sqrt{5}-1}<2$ and $\frac{2}{\sqrt{5}-1}-1=\frac{3-\sqrt{5}}{\sqrt{5}-1}\frac{\sqrt{5}+1}{\sqrt{5}+1}$.
\end{proof}

 However, this is not the full story given that the Gauss map will not induce the necessary cycles. There is another simple dynamical function that is integral to the coming work. Indeed, for $x\in [\frac{1}{2},1]$, it is identical to the Gauss map.

\begin{definition}[Farey map \cite{farey}]
We define the \textbf{Farey map}, which carries much of the concrete information on AdaBoost's periodic behavior, as
\begin{equation*}
F(x)=\begin{cases}
      \frac{x}{1-x} &\text{if } 0\leq x < \frac{1}{2} \\
      \frac{1-x}{x} &\text{if } \frac{1}{2}\leq x \leq 1 
   \end{cases}
\end{equation*}
that also has $\frac{\sqrt{5}-1}{2}$ as a fixed point.
\end{definition}

The Farey map has the convenient property that it cycles on \emph{quadratic irrationals}. Many of the simpler cycles we see in \cite{rudin} are actually just sequences of quadratic irrational approximations in practice and can be easily replicated. While we do not claim \emph{all} AdaBoost cycles are induced by the Farey map, the simplest examples appear to be so.

It is not difficult to create more cycles as examples using the AdaBoost algorithm itself. 
We can go about finding either a matrix as in the literature \cite{ortiz}, \cite{rudin} or a particular data set that induces cycling behavior and adjust its selection process for the edge value at each iteration. Optimal AdaBoost selects a hypothesis that induces the largest edge value at any iteration. 

After finding an input that induces cycling from AdaBoost, we can select a value less than the maximum edge value available and run the algorithm again. This time, if we have AdaBoost select the first edge above $\frac{2}{5}$, the algorithm should converge to the $2$-cycle on the edge values 
$\frac{1}{\sqrt{2}}, \sqrt{2}-1$.

These are again induced by the Farey map given this straightforward computation
\[
\frac{1-\frac{1}{\sqrt{2}}}{\frac{1}{\sqrt{2}}}=\sqrt{2}\left(1-\frac{1}{\sqrt{2}}\right) \text{ and } \frac{\sqrt{2}-1}{1-(\sqrt{2}-1)}=\frac{1}{\sqrt{2}}\frac{\sqrt{2}-1}{\sqrt{2}-1}.
\]
This is no coincidence, though we must make some modifications to be precise moving forward. For technical reasons what we seek is the inverse of the Farey map. We must do this to ensure that our dynamics have the right fixed point, which ought to be $\frac{\sqrt{5}-1}{2}$ given the algorithm's tendency to find this edge value.

\begin{definition}[Farey map inverses]
We will be interested in the inverse functions to the Farey map on its two intervals, which we will call the \textbf{Farey map inverses}. These are
\begin{equation}\label{eqn:inverses}
    L(x)=\text{\emph{inv}}\left(\frac{x}{1-x}\right)=\frac{x}{x+1}\text{ and }R(x)=\text{\emph{inv}}\left(\frac{1-x}{x}\right)=\frac{1}{x+1}
\end{equation}
for \emph{inv}$(f(x))$ the inverse function of the function $f(x)$.
\end{definition}

It is the $R$ function that induces convergence to a cycle, as it has fixed point $\frac{\sqrt{5}-1}{2}$ that is also attracting on $[0,1]$ since $\abs{\frac{d}{dx}R(x)}=\frac{1}{(1+x)^2}<1$ for all $x\in (0,1]$ \cite[Ch.\ 1]{brin}. The $L$ function also has a fixed point that attracts, but it is at $0$. The case of the edge converging to $0$ is the \emph{unrealizable} case of AdaBoost \cite{rudin} and will never correspond to non-trivial dynamics.

\begin{figure}[htp]
    \centering
    \includegraphics[width=11cm]{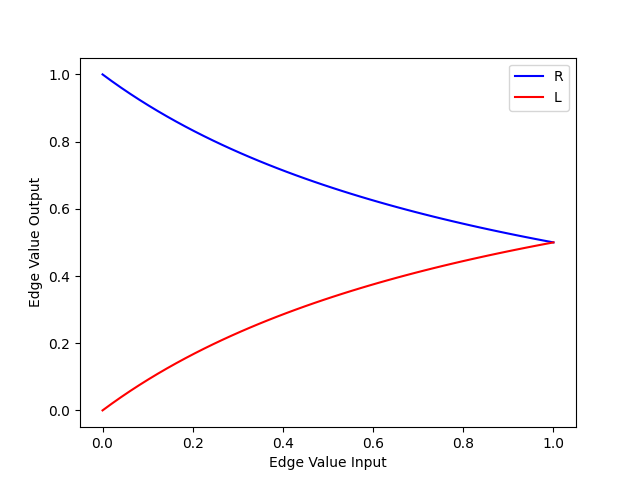}
\end{figure}

What is most interesting about this preliminary work involving the functions $L$ and $R$ is the method of attraction towards a Farey map cycle. By definition, taking $s\in\mathbb{N}$ and the composition $R^{(s)}$ we have that
\[\lim_{s\to\infty}R^{(s)}(1)=\frac{\sqrt{5}-1}{2}.\]
This is actually true for any $x$ for $x\in [0,1]$. So, convergence to one of these Farey map cycles appears to be straightforward. If the edge formula equals $R$ repeatedly, then it will iteratively evaluate closer and closer to the fixed point $\frac{\sqrt{5}-1}{2}$ until it reaches a value from which it begins to cycle in earnest. In the case of our fixed point edge cycle, this is simply the same value to which $R^{(s)}$ converges.

As shown in \cite{rudin} if AdaBoost is running on just three data points and three unique mistake dichotomies, then we can easily write out its dynamics by hand. However, dealing with these cycles in full generality will require some of our formalism from the coming two sections.

Suppose that $c\in\mathbb{N}$. Let $\{r_t,r_{t+1},...\}$ be a sequence of edges of AdaBoost starting at iteration $t$. If this sequence is defined as a cycle on the Farey map inverses, then for some $k\in\mathbb{N}$ we have that $r_t=r_{t+ck}$ for any $c$. Furthermore, we assume that $r_{t+i}\neq r_{t+j}$ for $i\neq j$ given $0\leq i,j\leq k-1$. We call this a \textbf{cycle on the Farey map inverses} and we will often refer to it as a \textbf{cycle on the Farey map}.

\section{How AdaBoost induces edge cycles on the Farey map}

In this section we will demonstrate the conditions under which AdaBoost's edge formula takes on a form that corresponds to continued fractions dynamics. Through our research this seems to be the basis for most or all cycling dynamics of the algorithm.

We will soon need to deal in more than just the $2$ indices defined in the edge formula of Eqn.~\ref{eq:edgeformula}. With some modifications this edge definition can be used to write the edge $r_t$ in terms of the weights of iteration $t-1$. However, in order to do anything like this, we must also partition our indices to include information about the mistake dichotomy of this previous iteration. We do this by defining $4$ index sets $I^+,I^-,J^+,J^-$. If $i\in I$ at iteration $t$ then either $i\in I^+$ or $i\in I^-$ depending on if $w_{i,t}=w_{i,t-1}\frac{1}{1+r_{t-1}}$ or $w_{i,t}=w_{i,t-1}\frac{1}{1-r_{t-1}}$ respectively, and we do the same with $j\in J$ using $J^+$ and $J^-$. This is convenient for notation since $I^+\cup I^-=I$, but it also allows for the following.

\begin{proposition}\label{prop:fourterm}
Let $\vec{w}_t$ be an $n$-component weight vector of AdaBoost at iteration $t$. The edge definition in Eqn.~\ref{eq:edgeformula} for $t$ can be written as a four term sum namely
\begin{align*}\label{eq:inverse}
\begin{split}
r_t = &\sum_{i^+\in I^+}w_{i^+,t-1}\frac{1}{1+r_{t-1}}-\sum_{j^+\in J^+}w_{j^+,t-1}\frac{1}{1+r_{t-1}}\\
 & +\sum_{i^-\in I^-}w_{i^-,t-1}\frac{1}{1-r_{t-1}}-\sum_{j^-\in J^-}w_{j^-,t-1}\frac{1}{1-r_{t-1}}
\end{split}
\end{align*}
with $I^+,I^-,J^+,J^-$ as defined immediately before this proposition. 
\end{proposition}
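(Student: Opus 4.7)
The plan is to start from the edge formula in Eqn.~\ref{eq:edgeformula}, $r_t = \sum_{i\in I} w_{i,t} - \sum_{j \in J} w_{j,t}$, and to rewrite each iteration-$t$ weight in terms of an iteration $t-1$ weight using the alternative weight update of Eqn.~\ref{eq:wgtformula}. Since $\eta_{k,t-1}\in\{\pm 1\}$, the update $w_{k,t} = w_{k,t-1}/(1+\eta_{k,t-1} r_{t-1})$ produces exactly one of two possible denominators, $1+r_{t-1}$ or $1-r_{t-1}$, according to whether data point $k$ was correctly classified or misclassified at iteration $t-1$. This is precisely the dichotomy encoded by the superscripts $\pm$ in the definition of $I^+, I^-, J^+, J^-$, so these four sets are well defined and partition $\{1,\dots,n\}$.

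Next I would split $I = I^+ \sqcup I^-$ and $J = J^+ \sqcup J^-$ inside the edge formula to obtain
\begin{equation*}
r_t = \sum_{i^+ \in I^+} w_{i^+,t} + \sum_{i^- \in I^-} w_{i^-,t} - \sum_{j^+ \in J^+} w_{j^+,t} - \sum_{j^- \in J^-} w_{j^-,t},
\end{equation*}
and then substitute $w_{k,t} = w_{k,t-1}/(1+r_{t-1})$ into the two sums indexed by $I^+$ and $J^+$ and $w_{k,t} = w_{k,t-1}/(1-r_{t-1})$ into the two sums indexed by $I^-$ and $J^-$. Factoring the iteration-$(t-1)$ denominators out of each sum (or leaving them inside, as written in the statement) then gives the claimed four-term identity directly.

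There is no real obstacle here; the argument is essentially a definitional unpacking. The only care required is in the bookkeeping: the sign in front of each of the four sums is determined solely by whether the index lies in $I$ or $J$ at iteration $t$ (the mistake status \emph{now}), while the $\pm$ superscript that selects the denominator $1 \pm r_{t-1}$ is determined by the mistake status at iteration $t-1$ (the mistake status \emph{previously}). These two pieces of information are logically independent, which is exactly the reason that a partition into four sets, rather than two, is necessary.
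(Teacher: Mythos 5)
Your proposal is correct and follows essentially the same route as the paper's proof: both rewrite the edge formula by substituting the weight update $w_{k,t}=w_{k,t-1}/(1+\eta_{k,t-1}r_{t-1})$ and then partition the index set according to the classification status at iteration $t$ (which fixes the sign) and at iteration $t-1$ (which fixes the denominator). Your explicit remark that these two pieces of information are independent, forcing a four-way rather than two-way partition, is exactly the "working out how this partitions" step the paper leaves implicit.
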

\begin{proof} Using the weight update formula from Eqn.~\ref{eq:wgtformula} we can write the edge formula as
\begin{equation}
r_t=\sum_{i\in I\cup J}\eta_{i,t}w_{i,t-1}\frac{1}{1+\eta_{i,t-1}r_{t-1}}
\end{equation}
for $\eta_{i,t}$ the values of the mistake dichotomy at component $i$ for iteration $t$. Working out how this partitions each set of positive and negative weights in the linear sum we get the result.
\end{proof}

Using this four term edge calculation, we can define a useful definition that will be mentioned many times throughout this paper. In Eqn.~\ref{eq:edgeformula} we defined the edge as a value at a specific iteration, and the following is an \emph{edge update} from one iteration to another.

\begin{definition}
Suppose we have the conditions of Proposition~\ref{prop:fourterm}. When we are given the edge value $r_t$ in terms of weights and the edge of iteration $t-1$, we call this the \textbf{edge update} from iteration $t-1$ to $t$.
\end{definition}

\begin{theorem}[\hyperlink{10}{Proof}]\label{thm:3wgt}
Let $\vec{w}_t$ be an $n$-component weight vector at iteration $t$. If we assume that the periodic learning condition holds for all iterations, then the AdaBoost edge update simplifies to
\begin{equation}\label{eqn:edgeupdate}
    r_t=\frac{1+r_{t-1}-2\sum_{j^+\in J^+}w_{j^+,t-1}}{1+r_{t-1}}.
\end{equation}
Also, the converse statement is true.
\end{theorem}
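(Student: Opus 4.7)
The plan is to start from the four-term expression in Proposition~\ref{prop:fourterm} and show that the periodic learning condition $\nabla$ collapses two of the four sums into quantities we already know, leaving the stated three-term ratio. Recall that $I^{\pm}$ and $J^{\pm}$ partition the index set at iteration $t$ according to whether $\eta_{i,t-1}=\pm 1$. The observation driving the proof is that $\nabla$ says $\eta_{i,t-1}=-1$ forces $\eta_{i,t}=+1$, so any index with $\eta_{i,t-1}=-1$ belongs to $I$ rather than $J$ at iteration $t$. Hence $J^{-}=\emptyset$, which immediately kills the last sum of the four-term edge update.

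Next I would exploit two weight identities. First, since $\sum_i w_{i,t-1}=1$, the alternative edge formula \eqref{eq:edgealt} (applied at iteration $t-1$) gives
\[
\sum_{i:\,\eta_{i,t-1}=+1} w_{i,t-1}=\tfrac{1+r_{t-1}}{2},\qquad \sum_{i:\,\eta_{i,t-1}=-1} w_{i,t-1}=\tfrac{1-r_{t-1}}{2}.
\]
Under $\nabla$ these become $\sum_{I^+}w+\sum_{J^+}w=\tfrac{1+r_{t-1}}{2}$ and $\sum_{I^-}w=\tfrac{1-r_{t-1}}{2}$. Substituting into the four-term form of Proposition~\ref{prop:fourterm} converts the $(1-r_{t-1})^{-1}$ term into $\tfrac{1}{2}$, and lets me replace $\sum_{I^+}w_{i^+,t-1}$ by $\tfrac{1+r_{t-1}}{2}-\sum_{J^+}w_{j^+,t-1}$. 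A short algebraic simplification then yields \eqref{eqn:edgeupdate}.

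For the converse, I would assume that \eqref{eqn:edgeupdate} holds and run the same manipulations backward. Concretely, equate the four-term expression of Proposition~\ref{prop:fourterm} with \eqref{eqn:edgeupdate}, use $\sum_{I^+}w+\sum_{J^+}w=\tfrac{1+r_{t-1}}{2}$ (which is just the identity $\sum_i w_{i,t-1}=1$ split by sign of $\eta_{i,t-1}$, and requires no assumption on $\nabla$), and simplify. The $\tfrac{1}{1+r_{t-1}}$ contributions match identically, and what remains is the condition
\[
\frac{\sum_{I^-}w_{i^-,t-1}-\sum_{J^-}w_{j^-,t-1}}{1-r_{t-1}}=\tfrac{1}{2}.
\]
Since $\sum_{I^-}w+\sum_{J^-}w=\tfrac{1-r_{t-1}}{2}$, this forces $\sum_{J^-}w_{j^-,t-1}=0$, and since weights are strictly positive we get $J^-=\emptyset$, i.e.\ no index is misclassified in two consecutive iterations. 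That is exactly $\nabla$.

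The one delicate point is the bookkeeping: the symbols $I,J$ in the four-term formula refer to the partition at iteration $t$, while the superscripts $\pm$ refer to iteration $t-1$, and the identities on total weight I need to invoke live at iteration $t-1$. I would set up this notation very explicitly at the start of the proof to avoid sign-flip or indexing errors; everything else is linear algebra on four non-negative sums constrained by two equalities.
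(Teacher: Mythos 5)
Your proposal is correct. Its forward direction follows essentially the same route as the paper: both start from the four-term decomposition of Proposition~\ref{prop:fourterm}, observe that the periodic learning condition forces $J^-=\emptyset$, and reduce to Eqn.~\ref{eqn:edgeupdate}; the only cosmetic difference is that the paper invokes the alternative edge formula (Eqn.~\ref{eq:edgealt}) at iteration $t$ to write $r_t=1-2\sum_{j^+\in J^+}w_{j^+,t-1}/(1+r_{t-1})$ directly, whereas you substitute the iteration-$(t-1)$ sign-class totals $\tfrac{1\pm r_{t-1}}{2}$ into the surviving three sums and simplify, which lands in the same place. Where you genuinely depart from the paper, and to your advantage, is the converse. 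The paper merely rearranges Eqn.~\ref{eqn:edgeupdate} back into $r_t=1-2\sum_{j^+\in J^+}\frac{w_{j^+,t-1}}{1+r_{t-1}}$ and asserts that this ``is simply our original definition for the edge with $J^-=\emptyset$'' --- an observation of consistency rather than a derivation that $J^-$ must be empty. Your version equates Eqn.~\ref{eqn:edgeupdate} with the full four-term expression, cancels the $(1+r_{t-1})^{-1}$ contributions via $\sum_{I^+}w+\sum_{J^+}w=\tfrac{1+r_{t-1}}{2}$, and is left with $\sum_{I^-}w-\sum_{J^-}w=\tfrac{1-r_{t-1}}{2}=\sum_{I^-}w+\sum_{J^-}w$, forcing $\sum_{j^-\in J^-}w_{j^-,t-1}=0$ and hence $J^-=\emptyset$ by strict positivity of the weights. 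This closes a logical gap the paper leaves open. One caveat to state explicitly in your write-up: $J^-=\emptyset$ at a single iteration is only one instance of $\nabla$, so recovering the full periodic learning condition requires the hypothesis that Eqn.~\ref{eqn:edgeupdate} holds at every iteration, matching how the theorem is phrased.
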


This theorem is crucial for understanding the cycling dynamics of AdaBoost. It sets up a criterion for the sequential selection of mistake dichotomies to induce a nice edge update form that resembles a continued fractions dynamical map. It seems likely that this sort of criterion governs \emph{most} cycling behavior for AdaBoost, although we will not present proof for that claim in this work.

As stated previously in Section~2, we now have the ability to show the natural occurrence of the periodic learning condition in publicly available data sets. Since Theorem~\ref{thm:3wgt} is an \emph{if and only if} statement, $\sum_{j^+\in J^+}w_{j^+,t-1}=\frac{r_{t-1}}{2}$ implies that Eqn.~\ref{eqn:edgeupdate} evaluates to $\frac{1}{1+r_{t-1}}$ under the periodic learning. The Golden Ratio minus $1$ is the unique fixed point of $\frac{1}{1+r_{t-1}}$ and evidence of this fixed point edge cycle from various data sets can be found in Appendix~\ref{sec:appB}.

Indeed, it is the value of the sum 
$\sum_{j^+\in J^+}w_{j^+,t-1}$ that determines the form of the RHS of Eqn.~\ref{eqn:edgeupdate}. When $\sum_{j^+\in J^+}w_{j^+,t-1}=\frac{1}{2}$, we get $r_t=\frac{r_{t-1}}{1+r_{t-1}}$ and combining this with the case in the previous paragraph gives us the inverse Farey map values from Eqn.s~\ref{eqn:inverses}. Hence now we see how AdaBoost may come to cycle on its edge values given the periodic learning condition and appropriate values for the edge update parameters.

\section{Cycling relation between edges and weights}

Consider edge cycles that are not restricted to the Farey map inverses. As shown in Theorem~\ref{thm:3wgt}, when the data set has the periodic learning condition $\nabla$, this induces an edge update
\begin{equation*}
    r_t=\frac{1+r_{t-1}-2\sum_{j^+\in J^+}w_{j^+,t-1}}{1+r_{t-1}}
\end{equation*}
for iterations $t-1$ of which the condition holds.
While we do not know that each cycle must take place with this edge update form, it appears to us that $\nabla$ is related to all cycling behavior. We will now analyze this more general form, rather than merely the special case of Farey map inverses.

A nice property of AdaBoost under the $\nabla$ condition is that we may use the proof of Theorem~\ref{thm:3wgt} to obtain the combined value of each subsum given by
\begin{eqnarray*}
r_t=\sum_{i^+\in I^+}w_{i^+,t-1}\frac{1}{1+r_{t-1}}-\sum_{j^+\in J^+}w_{j^+,t-1}\frac{1}{1+r_{t-1}}
+\sum_{i^-\in I^-}w_{i^-,t-1}\frac{1}{1-r_{t-1}}.
\end{eqnarray*}

\begin{proposition}[\hyperlink{20}{Proof}]\label{prop:wvals}
When condition $\nabla$ holds the three subsums above have unique values in $\{\frac{1}{2},\frac{r_{t}}{2},\frac{1-r_{t}}{2}\}$.
\end{proposition}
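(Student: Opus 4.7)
The plan is to identify each of the three subsums individually by exploiting condition $\nabla$ together with the alternative edge formula (Eqn.~\ref{eq:edgealt}) applied at both iterations $t$ and $t-1$. The key observation that unlocks everything is that under $\nabla$ we have $J^- = \emptyset$: by definition $J^-$ consists of indices misclassified at both $t-1$ and $t$, but $\nabla$ forbids two consecutive $-1$'s in a row of the mistake lattice. This is exactly why the fourth term of Proposition~\ref{prop:fourterm} disappears and we are left with the three-term sum shown above.

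First, I would apply Eqn.~\ref{eq:edgealt} at iteration $t$. Because $J^- = \emptyset$, the misclassified index set at iteration $t$ is exactly $J^+$, and for each $j^+ \in J^+$ the weight update gives $w_{j^+, t} = w_{j^+, t-1}/(1+r_{t-1})$. Hence
\[
\sum_{j^+ \in J^+} w_{j^+, t-1}\,\frac{1}{1+r_{t-1}} \;=\; \sum_{j \in J} w_{j, t} \;=\; \frac{1-r_t}{2},
\]
which identifies the $J^+$ subsum as $\tfrac{1-r_t}{2}$.

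Next I would apply Eqn.~\ref{eq:edgealt} at iteration $t-1$. Under $\nabla$, the indices misclassified at $t-1$ are precisely $I^- \cup J^- = I^-$, so $\sum_{i^- \in I^-} w_{i^-,t-1} = \frac{1-r_{t-1}}{2}$. Dividing through by $1-r_{t-1}$ gives
\[
\sum_{i^- \in I^-} w_{i^-,t-1}\,\frac{1}{1-r_{t-1}} \;=\; \frac{1}{2},
\]
pinning down the $I^-$ subsum. The $I^+$ subsum then follows from either the three-term edge identity $r_t = \sum_{I^+} - \sum_{J^+} + \sum_{I^-}$ or, equivalently, from the fact that the total weight at iteration $t$ sums to $1$: since $\sum_{i \in I} w_{i,t} = \frac{1+r_t}{2}$ and the $I^-$ subsum is $\tfrac{1}{2}$, the $I^+$ subsum must equal $\frac{1+r_t}{2} - \frac{1}{2} = \frac{r_t}{2}$.

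The argument is entirely bookkeeping once one sees that $\nabla$ kills $J^-$; the only subtlety, and the step I would double-check, is that the index partition relative to iteration $t-1$ is consistent, i.e.\ that the misclassified-at-$t-1$ set really is $I^-$ (and not something larger) so that Eqn.~\ref{eq:edgealt} applies cleanly at the previous iteration. This is essentially a tautology from the definitions of $I^\pm, J^\pm$ combined with $J^- = \emptyset$, so I do not expect any real obstacle beyond careful index tracking.
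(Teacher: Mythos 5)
Your proposal is correct and follows essentially the same route as the paper's own proof: identify the $J^+$ subsum as $\tfrac{1-r_t}{2}$ via Eqn.~\ref{eq:edgealt} at iteration $t$ (using $J^-=\emptyset$), identify the $I^-$ subsum as $\tfrac{1}{2}$ via the edge identity at iteration $t-1$, and obtain the $I^+$ subsum as $\tfrac{r_t}{2}$ by subtraction. Your explicit justification that the misclassified-at-$(t-1)$ set equals $I^-$ is in fact slightly more careful than the paper's one-line assertion of the same fact.
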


This above proposition will be vital in getting a grasp on the cycling dynamics of AdaBoost. If we have an $n$-component weight vector $\vec{w}_t$ at iteration $t$, then we have to consider how AdaBoost distributes the combined values as above over its perhaps numerous weights. As in \cite{rudin} we can use simplex analysis to understand what happens to weight values as data sets grow larger. 

Each weight vector $\vec{w}_t$ exists inside of an $(n-1)$-dimensional probability simplex in $\mathbb{R}^n$ subdivided by decision boundaries whose $m\in\mathbb{N}$ interiors represent the algorithm's choice of hypothesis with an optimal mistake dichotomy. In experimentation it appears that when AdaBoost is cycling, weight values for specific cycles will distribute in nice ways.

For this reason it appears that AdaBoost may scale a very short cycle up to large data sets. One may receive a $3$-cycle on the weights of AdaBoost from a data set with a hundred data points. We will work towards applying these assumptions with the following notion. Suppose we index the weights of $\vec{w}_t$ with the set $I\cup J=\{1,...,n\}$. We move towards formalizing AdaBoost's tendency to nicely distribute weight values across components of the weight vector via a sequence $\{\beta_j\}_{j\in I\cup J}$ with $\beta_j\in\mathbb{Q}\cap [0,1]$ for all $j$ so that $\sum_j\beta_j\in\mathbb{N}$.

\begin{definition}[alternative weights]\label{def:altw}
Let $i\in I\cup J$ and define $\beta_i\in[0,1]$ for all $i$. For AdaBoost cycling with condition $\nabla$, we subdivide the indexing set of an $n$-component weight vector $\vec{w}_t$ by
$I\cup J=I^+\cup I^-\cup J^+$
as in our previous work from earlier sections.

Given Proposition~\ref{prop:wvals} and Proposition~\ref{prop:fourterm} combined with $\nabla$ we have
$r_t=\frac{r_t}{2}+\frac{1}{2}-\frac{1-r_t}{2}$
so that each subsum of weights has individual contributions indexed by $I^+,I^-,J^+$ with
$\{\beta_{i^+,t}\}_{i^+\in I^+}$, $\{\beta_{i^-,t}\}_{i^-\in I^-}$, $\{\beta_{j^+,t}\}_{j^+\in J^+}$. To make our coming calculations work, the sum of $\beta_{i,t}$ terms over individual subindex sets $I^+,I^-,J^+$ is $1$. We call $\beta_{i,t}$ the \textbf{contribution} of weight $i\in I\cup J$ at iteration $t$.
This gives us \textbf{alternative weight} representations for $\vec{w}_t$ that sum
\[r_t=\sum_{i^+\in I^+}\beta_{i^+,t}\frac{r_t}{2}+\sum_{i^-\in I^-}\beta_{i^-,t}\frac{1}{2}-\sum_{j^+\in J^+}\beta_{j^+,t}\frac{1-r_t}{2}\]
where for all $i\in I\cup J$ we have $w_{i,t}\in\{\beta_{i,t}\frac{1}{2},\beta_{i,t}\frac{r_t}{2},\beta_{i,t}\frac{1-r_t}{2}\}$ depending on the membership of $i$ in the subindices $I^+,I^-,J^+$.
\end{definition}
\begin{remark}\label{rem:rat}
As noted before the weights of AdaBoost are geometrically the vectors in an $(n-1)$-dimensional probability simplex with $n$-component distributions. Adding dimensions to the simplex to accommodate larger data sets preserves the geometric relations of $(\beta_{i,t})_{i\in I}$ in the simplex as a subset of $\mathbb{R}^n$ up to scaling by a positive value less than $1$. The individual values of $\beta_i$ generally behave well and we assume for use in our mathematics that $\beta_i\in\mathbb{Q}\cap [0,1]$ via our own experience with the algorithm, and this is reflected in some calculations seen in \cite{rudin}. Also, note that for not all $i\in I\cup J$ is $\beta_i=0$ true without obtaining trivial dynamics. 
\end{remark}
In experimentation AdaBoost has a tendency to redistribute the alternative weight contributions shown above. The algorithm may change the contributions to certain values, make a contribution $0$ entirely, and take a $0$-valued contribution to a positive value. Generally this does not change the cycle in any fundamental way except permuting or reweighting rows, and at times even reversing the cycle altogether \cite{rudin}.

We will not worry about this sort of redistribution behavior and treat $\beta_i$ as a fixed value. The tendency for AdaBoost to do this even after finding a limit cycle is most likely a matter of implementation, and not dynamically significant for us here. While these alternative weights may seem obvious it allows us to reason about the weights of AdaBoost using the set $\{\frac{1}{2},\frac{r_t}{2},\frac{1-r_t}{2}\}$ along with what we know about the Farey map inverses. 

\begin{theorem}[\hyperlink{23}{Proof}]\label{thm:gencyc}
Suppose that AdaBoost is cycling on the edge values $\{r_l,...,r_{l-1+k}\}$ and that $\nabla$ holds. Let $\vec{w}_t$ be an $n$-component weight vector at iteration $l\leq t$. Given these conditions, if two lattices of AdaBoost on iterations $t+sk$ to $t+ck$ for $s<c$ agree on their corresponding weights and mistake dichotomy at $t+j$, then the weights and mistake dichotomies agree for all iterations.
\end{theorem}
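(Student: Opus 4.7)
The plan is to exploit the determinism of AdaBoost's state evolution together with the $k$-periodicity guaranteed by the hypothesis that both lattices are cycling on the same edge sequence $\{r_l,\ldots,r_{l-1+k}\}$. Once the full state (weight vector together with mistake dichotomy) coincides at a single iteration, deterministic forward dynamics will force agreement at every later iteration, and a pullback by $k$-periodicity will fill in the earlier iterations of the window.

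First I would fix notation: denote the two lattices by $A$ and $B$, with weights $w^A_{i,s}, w^B_{i,s}$ and dichotomies $\eta^A_{i,s}, \eta^B_{i,s}$. Since both are cycling on the edge sequence $\{r_l,\ldots,r_{l-1+k}\}$, the edges agree at every iteration, so I may write simply $r_s$. The hypothesis gives $w^A_{i,t+j} = w^B_{i,t+j}$ and $\eta^A_{i,t+j} = \eta^B_{i,t+j}$ for every $i$.

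For the forward step, apply the weight update formula from Eqn.~\ref{eq:wgtformula}:
\begin{equation*}
w^A_{i,t+j+1} = \frac{w^A_{i,t+j}}{1+\eta^A_{i,t+j}\,r_{t+j}} = \frac{w^B_{i,t+j}}{1+\eta^B_{i,t+j}\,r_{t+j}} = w^B_{i,t+j+1}.
\end{equation*}
For the dichotomy at $t+j+1$, AdaBoost picks $\eta_{t+j+1}$ as an $\mathrm{argmax}$ of $\vec{w}_{t+j+1}\cdot\eta'$ (as in Algorithm~\ref{alg:adaboost}); since the weight vectors now coincide, any deterministic tie-breaking rule produces the same dichotomy, giving $\eta^A_{i,t+j+1}=\eta^B_{i,t+j+1}$. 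Induction on this pair of updates yields agreement at every iteration $s\ge t+j$ within the window.

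For iterations $s<t+j$, I would use $k$-periodicity of both orbits: cycling on edges together with condition $\nabla$ and Theorem~\ref{thm:3wgt} forces the weight sequence and dichotomy sequence within each lattice to also be $k$-periodic. Thus for any $s<t+j$ in the window I can choose $m\in\mathbb{N}$ large enough that $s+mk\ge t+j$, and then
\begin{equation*}
w^A_{i,s}=w^A_{i,s+mk}=w^B_{i,s+mk}=w^B_{i,s},
\end{equation*}
and likewise $\eta^A_{i,s}=\eta^B_{i,s}$, completing agreement throughout the window.

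The main obstacle is justifying uniqueness of the mistake dichotomy propagation when the $\mathrm{argmax}$ could be non-unique. I would handle this by invoking a deterministic tie-breaking rule, which is implicit in any concrete implementation of Algorithm~\ref{alg:adaboost}; a more robust alternative is to appeal to Theorem~\ref{thm:3wgt} together with condition $\nabla$, which pins down the dichotomy sequence inside a cycle up to permutations of components whose contributions in the sense of Definition~\ref{def:altw} vanish, so that no ambiguity can arise in the forward orbit once a full state has been fixed.
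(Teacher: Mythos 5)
Your forward step conceals the entire content of the theorem inside the phrase ``deterministic tie-breaking rule.'' The two lattices $A,B$ are not two runs of a fixed implementation; they are two a priori admissible mistake-dichotomy sequences, both consistent with the cycling edge values and with $\nabla$. The nontrivial claim is that at iteration $t+q+1$ there cannot exist two \emph{distinct} dichotomies $\eta^A_{t+q+1}\neq\eta^B_{t+q+1}$ that both realize the cycle-prescribed edge $r_{t+q+1}$ on the common weight vector. The paper's proof is devoted exactly to this point: using the alternative-weight decomposition of Definition~\ref{def:altw}, condition $\nabla$ rules out flipping any $I^-$ component (that would place two consecutive $-1$ entries in one row of the lattice), so the two dichotomies could differ only by exchanging $I^+$ and $J^+$ components; equating the two resulting expressions for $r_{t+q+1}$ then forces $\frac{r_{t+q+1}}{1-r_{t+q+1}}$ to equal a ratio of sums of rational contributions $\beta$, hence to be rational, contradicting the fact that periodic values of the Farey map are quadratic irrationals. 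Appealing to an implementation's tie-break, or vaguely to Theorem~\ref{thm:3wgt} ``pinning down'' the dichotomy sequence, assumes precisely what must be proved.

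Your backward step has a second gap: you assert that cycling of the \emph{edges} with period $k$ forces the weight and dichotomy sequences of each lattice to be $k$-periodic, and then pull back by $mk$. Nothing in the hypotheses or in the results you cite gives you this; the passage from edge cycles to weight cycles is itself a substantive claim (essentially what this circle of results is built to address), so it cannot be used as a lemma here. The paper instead handles iterations before the common iteration by running the same uniqueness-by-contradiction argument in the reverse direction, which requires no periodicity of the weights at all. The computation $w^A_{i,t+j+1}=w^B_{i,t+j+1}$ from the weight update in your first display is correct, but it is the only part of your argument that survives.
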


We end this section at the above result, given that any further details on what is and is not completely determined by the edge cycles can be worked out from here.

\section{Discussion}

What is most striking about this explanation for the cycling dynamics of AdaBoost is how the algorithm passes to these periodic states on particular data sets. This cycling behavior is most easily seen in the artificial matrix input form of \cite{rudin}, but also arises on data sets that are often low in complexity or low in data point count. Without any obvious pre-programming, the algorithm is able to utilize these especially simple and uniform routines to execute its optimizations. This cycling behavior is induced from only modest variation in the original, full breadth of evolutions available to the dynamical system in its non-cycling form.

These results may very well help to solve the open problem associated with the paper \cite{rudin}, which can be seen stated in this group's 2012 followup paper \cite{cycle}. This problem is whether each of these simulated input matrices $\mathbb{M}$ in place of data sets always induce cycling dynamics. The work showcased here came from attempting to answer this question. However, even with the details present it was not clear how to accomplish this goal.

Our results from the main body sections of this paper are quite particular to the case of cycling AdaBoost. Although we are not able to extend these methods to the general case of AdaBoost's dynamics, the dynamical systems chosen here are \emph{just right} for the task we set out to accomplish. Once this algorithm has converged to a limit cycle we are given dynamics that are predictable under the right lens, requiring no approximations or long-term recurrence properties. In this sense the project of this paper is a success.


\clearpage
\printbibliography[heading=bibintoc,title={References}]

\appendix

\section*{APPENDIX}

\section{Proofs of mathematical results}

\hypertarget{10}{}
\begin{proof}{\bf of Theorem~\ref{thm:3wgt}.}
Suppose the above assumption on the values of $\eta_{i,t}$ holds. Taking the values of the $n$-component weight vector $\vec{w}_t$ at time $t$, by Proposition~\ref{prop:fourterm},
\begin{eqnarray*}
r_t&=&\sum_{i^+\in I^+}w_{i^+,t-1}\frac{1}{1+r_{t-1}}-\sum_{j^+\in J^+}w_{j^+,t-1}\frac{1}{1+r_{t-1}}\\
&+&\sum_{i^-\in I^-}w_{i^-,t-1}\frac{1}{1-r_{t-1}}-\sum_{j^-\in J^-}w_{j^-,t-1}\frac{1}{1-r_{t-1}}.
\end{eqnarray*}
Then by assumption we have $J^-=\emptyset$. This is equivalent to
\begin{eqnarray*}
r_t&=&\sum_{i^+\in I^+}w_{i^+,t-1}\frac{1}{1+r_{t-1}}-\sum_{j^+\in J^+}w_{j^+,t-1}\frac{1}{1+r_{t-1}}\\
&+&\sum_{i^-\in I^-}w_{i^-,t-1}\frac{1}{1-r_{t-1}}-0.
\end{eqnarray*}
From Eqn.~\ref{eq:edgealt} we know that we can write the edge value at iteration $t$ as $r_t=1-2\sum_{j\in J}w_{j,t}$. This give us
\[r_t=1-2\sum_{j^+\in J^+}\frac{w_{j^+,t-1}}{1+r_{t-1}}\]
given that $J^-=\emptyset$. Thus upon rewriting this equation with the weight update formula, we also have that
\[r_t=\frac{1+r_{t-1}-2\sum_{j^+\in J^+}w_{j^+,t-1}}{1+r_{t-1}}.\]
 Conversely, if we let
\[r_t=\frac{1+r_{t-1}-2\sum_{j^+\in J^+}w_{j^+,t-1}}{1+r_{t-1}}\]
then
\[r_t=1-\frac{2\sum_{j^+\in J^+}w_{j^+,t-1}}{1+r_{t-1}}.\]
Rearranging slightly we find that
\[r_t=1-2\sum_{j^+\in J^+}\frac{w_{j^+,t-1}}{1+r_{t-1}}\]
which is simply our original definition for the edge with $J^-=\emptyset$.
\end{proof}


\hypertarget{20}{}
\begin{proof}{\bf of Proposition~\ref{prop:wvals}.}
We already know by Eqn.~\ref{eq:edgealt} that
\[\sum_{j^+\in J^+}w_{j^+,t-1}\frac{1}{1+r_{t-1}}=\frac{1-r_t}{2}\]
implied by our definition of the edge value. So by the proof of Theorem~\ref{thm:3wgt} 
\begin{eqnarray*}
\frac{1+r_t}{2}&=&\sum_{i^+\in I^+}w_{i^+,t-1}\frac{1}{1+r_{t-1}}+\sum_{i^-\in I^-}w_{i^-,t-1}\frac{1}{1-r_{t-1}}.
\end{eqnarray*}
Since $I^-$ indexes each of the weights that correspond to a misclassified data point we know that
\[\sum_{i^-\in I^-}w_{i^-,t-1}=\frac{1-r_{t-1}}{2}\]
so that
\[\frac{1+r_t}{2}=\sum_{i^+\in I^+}w_{i^+,t-1}\frac{1}{1+r_{t-1}}+\frac{1}{2}\]
and hence the result.
\end{proof}

\hypertarget{23}{}
\begin{proof}{\bf of Theorem~\ref{thm:gencyc}.}
Assume the conditions given above. Now let $\{\beta_{i^+,t}\}_{i^+\in I^+}$, $\{\beta_{i^-,t}\}_{i^-\in I^-}$, $\{\beta_{j^+,t}\}_{j^+\in J^+}$ be contributions at iteration $t$ as defined in Definition~\ref{def:altw}. For the $n$-component weight vector $\vec{w}_t$ and cycle $\{r_l,...,r_{l-1+k}\}$ we assume two separate lattices exist from iterations $t+sk$ to $t+ck$ for $s<c$, $l\leq t$. We call them lattices $A,B$.

Fix $q\in\mathbb{N}$ such that $sk\leq q\leq ck$. Assume lattices $A,B$ agree in terms of mistake dichotomy and corresponding weights at iteration $t+q$. We know that the edge values are already fixed by the cycle, so we want to show that these edge values and the common iteration determines both $A,B$. Suppose by way of contradiction that $A,B$ differ at iteration $t+q+1$ and let $\eta^A_{t+q},\eta^B_{t+q}$ be the mistake dichotomies of $A,B$ respectively for iteration $t+q$.

We know that $\eta^A_{t+q}=\eta^B_{t+q}$ by assumption and that the edge values are fixed. Given this the weight vector $\vec{w}_{t+q+1}$ agrees across both lattices after updating from the previous weights. Using our alternative weights we may assume that for $i_A\in I_A\cup J_A$ for lattice $A$
\[r_{t+q+1}=\sum_{i^+_A\in I^+_A}\beta_{i^+_A,t+q+1}\frac{r_{t+q+1}}{2}+\sum_{i^-_A\in I^-_A}\beta_{i^-_A,t+q+1}\frac{1}{2}-\sum_{j^+_A\in J^+_A}\beta_{j^+_A,t+q+1}\frac{1-r_{t+q+1}}{2}.\]
Given that $\eta^A_{t+q+1}\neq\eta^B_{t+q+1}$, the two dichotomies differ by $d\in\mathbb{N}$ components. Recall that we have assumed $\nabla$ as part of the setup of this proof and fix $i^-\in I^-$. Our assumption means that $A,B$ may not differ by setting $\beta_{i^-_A,t+q+1}\frac{1}{2}\sim-1$. Otherwise both iterations $t+q,t+q+1$ correspond to misclassified data points on row $i^-$ of the lattice, which contradicts $\nabla$. This is because an alternative weight has factor $\frac{1}{2}$ if it correspondeds to a misclassification in the previous iteration.

Since switching positive terms does not change the sum above for $r_{t+q+1}$, our interest is in switching $I^+,J^+$ index terms. Given that $A,B$ both induce a sum as above, those sums must be equal. We will index the terms on which $A,B$ disagree with $i^+_d,j^+_d\in I_d\cup J_d\subset I\cup J$. With the terms for which $A,B$ agree we can simply cancel them in our equality of the two lattices' edge updates, leaving us with a useful relation
$$\sum_{i^+_d\in I^+_d}\beta_{i^+_d,t+q+1}\frac{r_{t+q+1}}{2}-\sum_{j^+_d\in J^+_d}\beta_{j^+_d,t+q+1}\frac{1-r_{t+q+1}}{2}$$
$$=$$
$$\sum_{j^+_d\in J^+_d}\beta_{j^+_d,t+q+1}\frac{1-r_{t+q+1}}{2}-\sum_{i^+_d\in I^+_d}\beta_{i^+_d,t+q+1}\frac{r_{t+q+1}}{2}
$$
which implies that
\[\sum_{i^+_d\in I^+_d}\beta_{i^+_d,t+q+1}r_{t+q+1}=\sum_{j^+_d\in J^+_d}\beta_{j^+_d,t+q+1}(1-r_{t+q+1})\]
Then we obtain
\[\frac{r_{t+q+1}}{1-r_{t+q+1}}=\frac{\sum_{j^+_d\in J^+_d}\beta_{j^+_d,t+q+1}}{\sum_{i^+_d\in I^+_d}\beta_{i^+_d,t+q+1}}\in\mathbb{Q}\]
given our assumption in Remark~\ref{rem:rat}. This means the reciprocal of both sides of the above equation are rational numbers, too. But that is a contradiction because the left-hand side or its reciprocal is a periodic value of the Farey map, which are always quadratic irrationals. Thus, iteration $t+q+1$ is identical across lattices $A,B$.

For iteration $t+q-1$ the argument is identical. Since we can use this argument for any $q$ so that $sk\leq q\leq ck$, the same holds for the entirety of both lattices. This means that $A=B$.
\end{proof}

\newpage

\section{Experimental results}\label{sec:appB}
This appendix is entirely dedicated to experimental results that show the occurrence of the periodic learning condition that induce our titular limit cycles. AdaBoost's limit cycles were first introduced using matrices of mistake dichotomies in \cite{rudin} without the use of natural data sets. In \cite{ortiz} the authors suggest that the cycling of AdaBoost is a result of carefully selected \emph{synthetic data sets} (these matrices of mistake dichotomies), and we find that this is not the case.

\begin{center}
\begin{tabular}{||c c c c||} 
 \hline
 Data set & Original size & Sample size & Tree $($depth, leaves$)$ \\ [0.5ex] 
 \hline\hline
 Iris & 150 & 150 & (3,4) \\ 
 \hline
 seeds & 210 & 200 & (3,5) \\
 \hline
 wine & 178 & 160 & (3,3) \\
 \hline
 Dry Bean & 13611 & 120 & (3,3) \\
 \hline
 Breast Cancer & 286 & 50 & (1,2) \\ [1ex] 
 \hline
\end{tabular}

The above table lists the data sets used in our experimental results. Due to unknown reasons relating to some problem points in the data sets, data points were at times sampled randomly.

\end{center}

While this condition $\nabla$ is not in the language of probability theory like much of machine learning, it is an interesting behavior of the AdaBoost algorithm that does take place in natural data sets. The deterministic nature of this algorithm lends itself to the case-by-case empirical results found in this appendix. Throughout this appendix the only probabilistic component is in the sampling of data points from certain data sets found on the UCI data set repository.

\begin{figure}[htp]
    \centering
    \includegraphics[width=11cm]{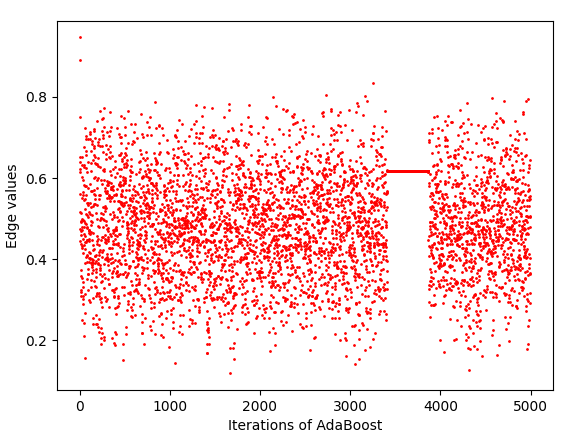}
    
    \emph{Iris Data Set} with no sampling of points. Each graph shown in this appendix was constructed in the same way.
    
\end{figure}

We can see in the image above that the UCI \emph{Iris} data set exhibits the $\frac{\sqrt{5}-1}{2}$ fixed point cycle from Section~3. The graph was created using edge values as the vertical axis and AdaBoost iterations the horizontal axis. This exact visualization may be replicated by graphing the edge values of the AdaBoost algorithm on this small data set using a maximum tree depth of $3$ and a maximum leaf count of $4$.

Given the result of Theorem~\ref{thm:3wgt}, AdaBoost cycles on this fixed point upon achieving the Farey map inverse edge update. Since this theorem is an \emph{if and only if} statement, we can conclude that AdaBoost may possibly cycle on this fixed point if and only if the periodic learning condition holds. Otherwise, the edge update would not take the Farey map inverse form that has $\frac{\sqrt{5}-1}{2}$ as a fixed point.

A possible reason for this cycling not being apparent to previous researchers is due to some concerns in finding appropriate data sets. For one, most small data sets from UCI, besides the \emph{Iris} data set as above, require a sampling of data points to exhibit the cycling behavior. As well, one must spend a serious amount of time looking at various data sets, and some data set samplings can be seen cycling only after tens of thousands of iterations.

\begin{figure}[htp]
    \centering
    \includegraphics[width=9cm]{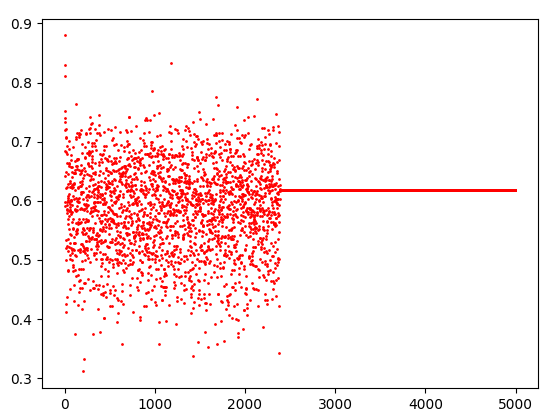}
    
    \emph{seeds Data Set} with $200$ sampled data points.
    
\end{figure}

The above edge value graph is for the UCI data set \emph{seeds Data Set} with $210$ data points. We were able to induce cycling on this data set by randomly sampling $200$ of the $210$ points using decision trees of $3$ depth and $5$ maximum leaf count. Interestingly, we do not get cycling when using the entire data set, suggesting that some small number of data points are the problem with respect to the non-occurrence of the $\nabla$ condition.

\begin{figure}[htp]
    \centering
    \includegraphics[width=9cm]{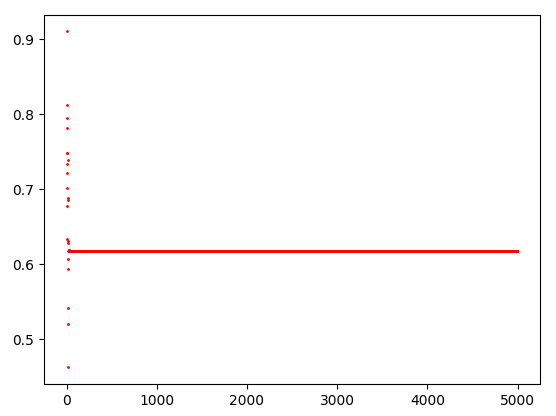}
    
    \emph{Wine Data Set} with $160$ sampled data points.
    
\end{figure}

\newpage

Our graph above is a generated from a random sampling of $160$ data points of the $178$ data point \emph{Wine Data Set} found on the UCI repository. Again, although this sampling of points is not much smaller than the entire data set, cycling is not induced by AdaBoost running on the original set. As before, the decision tree depth was $3$ and maximum leaf count $3$.

\begin{figure}[htp]
    \centering
    \includegraphics[width=9cm]{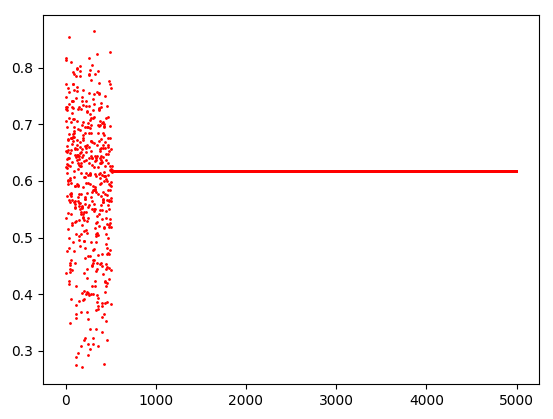}
    
    \emph{Dry Bean Data Set} with $120$ sampled data points.
    
\end{figure}

Another method for finding cycles in natural data sets was to take random samples of very large sets. Above is a graph for a $120$ point sample of the UCI \emph{Dry Bean Data Set}. The decision trees used had depth $3$ and maximum leaf count $3$. This set originally contains $13611$ data points, and many of these small random samplings exhibit cycling behavior.

\begin{figure}[htp]
    \centering
    \includegraphics[width=9cm]{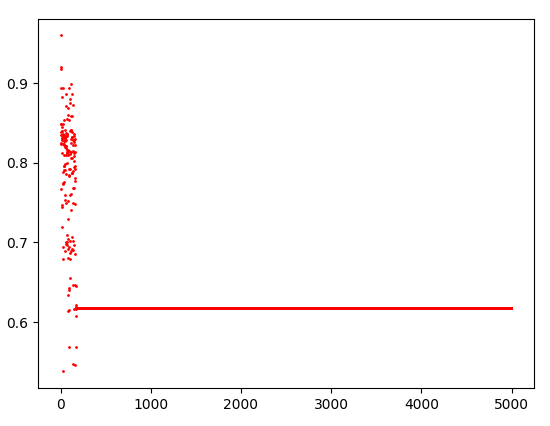}
    
    \emph{Breast Cancer Data Set} with $50$ sampled data points.
    
\end{figure}

Above is a graph of edge values from AdaBoost learning the UCI \emph{Breast Cancer Data Set} with random sampling of only $50$ data points, originally $286$. The learning properties of this data set are unusual compared to the rest shown in Appendix B, and only decision stumps were needed to exhibit cycling.

In \cite{ortiz} the writers conjecture that cycling does not take place on naturally occurring data. They use the \emph{Breast Cancer Data Set} learned using decision stumps among other data sets in an experimental result to show the unlikelihood of cycling behavior. Our experiments suggest that AdaBoost learning this data set in particular resists cycling unless a very small sampling is used. The reasons for this are unclear.

\end{document}